
\documentclass[5p,times]{elsarticle}





\usepackage{amssymb}
\usepackage{lipsum}
\usepackage[utf8]{inputenc}
\usepackage{textcomp}
\usepackage{bm}
\usepackage{amsmath}

\usepackage[version=4]{mhchem}
\usepackage{siunitx}
\usepackage{longtable,tabularx}
\usepackage{color}
\setlength\LTleft{0pt} 
\usepackage{amsthm}

\theoremstyle{remark}
\usepackage{graphicx}
\usepackage{subfigure}
\usepackage{float} 
\usepackage{caption}
\usepackage[section]{placeins}
\usepackage{enumitem}
\usepackage{hyperref}
{
\hypersetup{hypertex=true,colorlinks=blue, anchorcolor=blue,citecolor=blue}
}


\newtheorem{theorem}{Theorem}
\newtheorem*{proof}{Proof}
\newtheorem{proposition}{Proposition} 
\usepackage{glossaries}
\newacronym{AI}{AI}{Artificial Intelligence}
\newacronym{LLMs}{LLMs}{Large Language Models}
\newacronym{MAS}{MAS}{Multi-Agent Systems}

\journal{Chinese Journal of Aeronautics}

\begin{document}

\begin{frontmatter}


\title{Enhancing Robustness of LLM-Driven Multi-Agent Systems through Randomized Smoothing}


\author[inst1]{Jinwei Hu}
\author[inst1]{Yi Dong\corref{cor1}}
\author[inst2]{Zhengtao Ding}
\cortext[cor1]{Corresponding author}
\author[inst1]{Xiaowei Huang}
\affiliation[inst1]{organization={Department of Computer Science}, 
            addressline={University of Liverpool}, 
            city={Liverpool},
            postcode={L69 3BX}, 
            country={UK}}
\affiliation[inst2]{organization={Department of Electrical and Electronic Engineering}, 
            addressline={The University of Manchester}, 
            city={Manchester},
            postcode={M13 9PL}, 
            country={UK}}



\begin{abstract}
    This paper presents a defense framework for enhancing the safety of large language model (LLM)-empowered multi-agent systems (MAS) in safety-critical domains such as aerospace. We apply Randomized Smoothing—a statistical robustness certification technique—to the MAS consensus context, enabling probabilistic guarantees on agent decisions under adversarial influence. Unlike traditional verification methods, our approach operates in black-box settings and employs a two-stage adaptive sampling mechanism to balance robustness and computational efficiency. Simulation results demonstrate that our method effectively prevents the propagation of adversarial behaviors and hallucinations while maintaining consensus performance. This work provides a practical and scalable path toward safe deployment of LLM-based MAS in real-world high-stakes environments.
\end{abstract}



\begin{keyword}
Multi-agent systems  \sep large language models \sep  consensus seeking \sep safe planning \sep randomized smoothing
\end{keyword}

\end{frontmatter}


\section{Introduction}
\gls{MAS} play a critical role in a broad spectrum of domains including aerospace applications, where they are increasingly employed for cooperative decision-making, autonomous flight coordination, and mission execution. Traditionally, \gls{MAS} in these contexts have relied on well-defined mathematical models and provable convergence properties to achieve reliable consensus and coordination among agents \cite{dong2025nash, chunyan2024multiple, cao2012overview, vrohidis2017safe}. Methods such as average consensus protocols \cite{1205192}, leader-follower schemes \cite{ding2015consensus}, and distributed optimization algorithms \cite{liu2017constrained} provide formal safety and performance guarantees \cite{amirkhani2022consensus}. Similarly, conventional safe planning methods incorporate barrier functions, reachability analysis, and model predictive control to ensure consistent operational safety \cite{10529583, 10374434, yu2021model}. Nevertheless, with the rapid advancements in \gls{AI} and the emergence of \gls{LLMs}, these technologies are increasingly integrated into \gls{MAS}, introducing new adaptive decision-making and natural language interaction capabilities that extend beyond the traditional approaches \cite{chen2023multi}.
The widespread use of AI-related technologies in the safety-critical contexts also brings significant risks, particularly as real-world deployments increasingly reveal safety concerns in \gls{AI}-driven autonomous agents \cite{luettig2024ml, perez2024artificial, jabbour2024generative}. Recent incidents involving autonomous vehicles \cite{kim2020research, ma2025position}, drones and robots \cite{natarajan2025artificial}, as well as decision support systems highlight the potential risks when \gls{AI} systems fail to behave as expected \cite{acharya2025agentic}. Aerospace domains are particularly susceptible to these risks, such as misclassification of critical objects in autonomous navigation systems leading to collision events and erroneous recommendations in flight control systems compromising operational safety \cite{alves2018considerations}. \gls{AI}-driven flight management systems have demonstrated vulnerability to adversarial inputs that could potentially compromise trajectory planning and collision avoidance mechanisms during critical flight phases \cite{tian2021adversarial, hickling2023robust}. Additionally, unguaranteed behaviors in \gls{AI}-based intelligent groups can result in unintended coordination patterns that deviate from mission requirements, especially when operating under limited communication conditions, potentially resulting in catastrophic consequences in environments where operational margins are minimal and failure costs are extreme \cite{national2018time}.

As one of the most prominent AI technologies, \gls{LLMs} have begun to serve as a more flexible choice in modern \gls{MAS} for informed decision-making. While LLMs themselves have incorporated techniques such as noise-based smoothing to enhance robustness against adversarial perturbations \cite{ji-etal-2024-advancing}, the deployment of responsible LLM-driven \gls{MAS} can introduce more distinctive vulnerabilities: hallucinations can propagate through agent networks, reinforcing false beliefs that compromise decision-making \cite{ji2023survey,dong2024safeguardinglargelanguagemodels}; the black-box nature of these models makes perfect safeguards or trust management nearly impossible \cite{pmlr-v235-dong24c,hu2025trustorientedadaptiveguardrailslarge}; and their non-robust characteristics create unpredictable behaviors under minor input variations \cite{yin2025taijitextualanchoringimmunizing}. Furthermore, emergent behaviors have been observed in LLM-driven agents, such as unintended collusion and coordination patterns that may optimize for objectives misaligned with safety requirements \cite{motwani2024secret, lin2024strategic}. Traditional defense mechanisms for \gls{MAS}, including formal verification \cite{kouvaros2019formal}, invariant enforcement \cite{hibbard2022guaranteeing}, and model checking \cite{huang2019reasoning}, are inadequate for LLM-based systems due to their inherent stochasticity and the difficulty in formally specifying behavioral boundaries. While conventional \gls{MAS} rely on provable guarantees and well-defined mathematical constraints, the probabilistic nature of LLM-driven \gls{MAS} makes it impossible to verify safety or performance properties with absolute certainty \cite{wang-etal-2024-rethinking-bounds}. Although Chen et al. \cite{chen2023multi} recently explored achieving consensus in LLM-driven \gls{MAS} via iterative negotiation, their approach faces significant limitations in safety-critical aerospace environments: (1) vulnerability to stealthy malicious agents and hallucinations; (2) lack of formal safety guarantees, with no provable convergence under adversarial or uncertain conditions; and (3) non-deterministic behavior that amplifies uncertainty in consensus. These challenges highlight the urgent need for robust defense mechanisms that provide probabilistic safety guarantees across the entire agent network, rather than focusing solely on the dependability of single agent, while preserving the benefits of LLM-driven agents in aerospace applications \cite{hu2025position}.

In this work, we address these challenges in LLM-driven MAS by
proposing a novel defense framework based on Randomized Smoothing, a statistical certification technique originally developed for enhancing the robustness of deep neural networks against adversarial perturbations \cite{cohen2019certified}. Our approach adapts this technique to the \gls{MAS} consensus context, providing probabilistic guarantees on the reliability of inter-agent communication. The key innovation lies in a two-stage adaptive sampling strategy that efficiently verifies agent states by injecting Gaussian noise before input processing, thereby creating a certified radius within which agent decisions remain stable despite potential adversarial manipulations. Unlike traditional verification techniques that require complete model transparency, our method operates in the black-box setting inherent to LLMs, making it particularly suitable for practical deployment. Additionally, we implement a dynamic certification mechanism that adjusts verification intensity based on first-stage variance measurements, optimizing the trade-off between computational efficiency and safety assurance. Extensive simulations demonstrate that our approach effectively mitigates malicious behaviors in consensus-seeking scenarios, preventing the propagation of hallucinations and adversarial influence through the agent network while maintaining system performance. 



\section{Problem Formulation and Preliminaries} \label{problem}

\subsection{Problem Setup}
Consider a multi-agent system (MAS) with $n$ agents, where $n \geq 2$, denoted as $\mathcal{A} = \{a_1, a_2, \ldots, a_n\}$, with each agent being driven by a large language model (LLM). The communication structure between these agents is represented by a directed graph $\mathcal{G} = (\mathcal{V}, \mathcal{E})$, where $\mathcal{V} = \{1, 2, \ldots, n\}$ corresponds to the set of agents, and an edge $(i, j) \in \mathcal{E}$ indicates that agent $a_i$ can receive information from agent $a_j$. Each agent $a_i$ maintains a state $x_i(t)$ at discrete time step $t$. Similar to the work of Chen et al. \cite{chen2023multi}, each agent is initially assigned a random numerical value between $0$ and $1$ as its state. This value can be interpreted as the agent's position in a one-dimensional space or as an abstraction representing more complex information such as opinions, solutions, or decisions. The decision-making process of each agent is facilitated by an LLM function $\Phi_i: \mathcal{X}^{|\mathcal{N}_i|+1} \to \mathcal{X}$, where $\mathcal{N}_i = \{j | (i,j) \in \mathcal{E}\}$ represents the set of neighbors from which agent $a_i$ can receive information. The state update equation is:
\begin{equation} \label{eq:state_update}
    x_i(t+1) = \Phi_i(x_i(t), \{x_j(t) | j \in \mathcal{N}_i\})
\end{equation}

Ideally, the collective objective is for these agents to reach a consensus through an iterative decision-making process, where consensus is defined as all agents converging to the common state value as $t \rightarrow \infty$:
\begin{equation} \label{eq:perfect consensus}
   \lim_{t \to \infty} \|x_i(t) - x_j(t)\| = 0, \quad \forall i,j \in \{1,2,\ldots,n\}
\end{equation}

\subsection{Threat Model} \label{threat model}
We consider two primary threats in LLM-driven MAS:

\subsubsection{Stealthy Malicious Agents} Our threat model considers malicious influence that may appear in two complementary forms. Let $\mathcal{A}_{\text{mal}} \subset \mathcal{A}$ represent a subset of agents subject to malicious behavior. Each affected agent $a_k \in \mathcal{A}_{\text{mal}}$ transmits manipulated information $\hat{x}_k(t)$ instead of its true state $x_k(t)$. This manipulation follows a bounded, potentially probabilistic pattern:
\begin{equation} \label{eq:malicious}
    \hat{x}_k(t) = 
    \begin{cases}
        x_k(t) & \text{with probability } 1-p_k(t) \\
        \tilde{x}_k(t) & \text{with probability } p_k(t)
    \end{cases}
\end{equation}
where $\tilde{x}_k(t)$ represents the deceptive information and bounded:
\begin{equation}
    \|\tilde{x}_k(t) - x_k(t)\| \leq \delta_{max}
\end{equation}

This model encompasses both internally compromised agents that intentionally transmit false information and legitimate agents whose communications are intercepted and modified by external attackers. In both scenarios, the malicious influence is characterized by its stealthy nature, using bounded perturbations $\delta$ and strategic probability $p_k(t)$ to disrupt the consensus process while potentially evading detection.

\subsubsection{LLM Hallucination} 
LLM-driven agent itself may generate hallucinated information with probability $p_h$, resulting in an unintended output:

\begin{equation} \label{eq:hallucination} 
    \tilde{x}_i(t+1) = 
    \begin{cases}
        \Phi_i(x_i(t), \{\hat{x}_j(t) | j \in \mathcal{N}_i\}) & \text{with probability } 1-p_h \\
        \Phi_i^{\text{hall}}(x_i(t), \{\hat{x}_j(t) | j \in \mathcal{N}_i\}) & \text{with probability } p_h
    \end{cases}
\end{equation} 
where $\hat{x}_j(t)$ represents the information received by agent $a_i$ from its neighbor $a_j$ (which might be true or manipulated), and $\Phi_i^{\text{hall}}$ represents the hallucination function that deviates from the intended reasoning path.

\section{Safe Consensus Seeking Methods} \label{twostage}
\subsection{Defense Methods for Robust Consensus}
In this section, we propose a defense framework to address the aforementioned challenges with randomized smoothing.

\subsubsection{Randomized Smoothing for LLMs}
For each agent $a_i$, we introduce a smoothing function $\mathcal{S}_i: \mathcal{X}^{|\mathcal{N}_i|+1} \to \mathcal{X}$ defined as:
\begin{equation} \label{eq:smooth function}
    \mathcal{S}_i(\mathbf{z}_i) = \mathbb{E}_{\boldsymbol{\epsilon} \sim \mathcal{N}(0, \sigma^2 \mathbf{I})}[\Phi_i(\mathbf{z}_i + \boldsymbol{\epsilon})]
\end{equation}
where $\mathbf{z}_i = [x_i(t), \{\hat{x}_j(t) | j \in \mathcal{N}_i\}]$ represents the input vector containing agent $a_i$'s own state $x_i(t)$ and the received states $\hat{x}_j(t)$ from all neighboring agents $j \in \mathcal{N}_i$. The term $\boldsymbol{\epsilon}$ is a noise vector drawn from a Gaussian distribution with variance $\sigma^2$. In our framework, randomized smoothing is applied at two levels:

1) When an agent $a_i$ needs information from a neighboring agent $a_j$, it does not accept the raw reported position directly. Instead, $a_i$ queries $a_j$ multiple times with slight perturbations to confirm consistency in $a_j$'s reported position. Each neighbor $a_j$ is queried repeatedly through its smoothing function $\mathcal{S}_j$ to generate a reliable state estimate $\hat{x}_j(t)$ and then share the verified state with $a_i$:
\begin{equation}
    \hat{x}_j(t) = \hat{\mathcal{S}}_j(\mathbf{z}_j)
\end{equation}

This repeated querying ensures that the reported position from each neighbor has been verified for consistency, filtering out potential hallucinations or malicious perturbations.

2) When agent $a_i$ makes its own decision based on these received (already smoothed) states, it applies an additional layer of randomized smoothing. Agent $a_i$ repeatedly queries its LLM function $\Phi_i$ with slightly perturbed versions of the entire input vector to obtain multiple samples of possible outputs. In practice, we approximate the expected value in Eq~\eqref{eq:smooth function} through Monte Carlo sampling:
\begin{equation}
    \hat{\mathcal{S}}_i(\mathbf{z}_i) = \text{trim-mean}\{\Phi_i(\mathbf{z}_i + \boldsymbol{\epsilon}_1), \Phi_i(\mathbf{z}_i + \boldsymbol{\epsilon}_2), \ldots, \Phi_i(\mathbf{z}_i + \boldsymbol{\epsilon}_m)\}
\end{equation}
where $\boldsymbol{\epsilon}_1, \boldsymbol{\epsilon}_2, \ldots, \boldsymbol{\epsilon}_m$ are $m$ independent samples from $\mathcal{N}(0, \sigma^2 \mathbf{I})$, and trim-mean represents the mean of the samples after removing the most extreme values. 

This dual-level smoothing approach provides enhanced robustness against both potential sources of perturbation: (1) it reduces the influence of adversarial manipulations or hallucinations in the information shared between agents, and (2) it stabilizes each agent's own state-updating process against inconsistencies in LLM outputs or perturbations in collective input.

\subsubsection{Adaptive Sampling Strategy}
To optimize computational resources while maintaining robust defense capabilities, we implement a adaptive sampling strategy for randomized smoothing. In the initial sampling stage, each agent performs a small number of initial samples $m_1$ to estimate the variance of the outputs:
\begin{equation}
   \mathbf{V}_i = \frac{1}{m_1} \sum_{k=1}^{m_1} \left\|\Phi_i(\mathbf{z} + \epsilon_k) - \frac{1}{m_1} \sum_{j=1}^{m_1} \Phi_i(\mathbf{z} + \epsilon_j)\right\|^2
\end{equation}
This initial variance estimate serves as a low-cost probe to detect potential inconsistencies in LLM agents' responses, which may indicate either adversarial manipulation or model hallucination. Following this, in the adaptive sampling stage, based on the observed variance $\mathbf{V}_i$, each agent adjusts the number of additional samples $m_2(i)$ according to:
\begin{equation}
   m_2(i) = \min\left\{\left\lceil c \cdot \frac{\mathbf{V}_i}{\tau} \right\rceil, m_{\max}\right\}
\end{equation}
where $c > 0$ is a scaling factor, $\tau > 0$ is a threshold parameter, and $m_{\max}$ is the maximum number of samples allowed for computational efficiency. Our implementation uses predefined variance thresholds to categorize response variability, allocating fewer samples when variance is low and more samples when variance is high. This adaptive approach efficiently balances computational resources with defensive robustness across varying threat scenarios.

\subsection{Probabilistic Safety Guarantees}
Our defense framework provides theoretical probabilistic guarantees on the robustness of the LLM agents' decision-making against adversarial manipulations and internal hallucinations.

\subsubsection{Theoretical Foundation for Continuous State Spaces}
We establish probabilistic robustness guarantees for LLM-driven multi-agent decision-making by extending Cohen et al.'s \cite{cohen2019certified} randomized smoothing framework from discrete classification to continuous state spaces in MAS. Building on the smoothed function $\mathcal{S}_i$ defined in Eq~\eqref{eq:smooth function}, we analyze how LLM agents maintain reliable decision-making when processing potentially perturbed information from neighboring agents.

In our MAS context, each LLM agent must make decisions based on a vector of inputs $\mathbf{z}$ that includes its own state and the reported states of neighboring agents. These reported states may contain adversarial perturbations or hallucinations. To analyze decision stability, we partition the agent's possible decision space into regions $\{R_1, R_2, ..., R_k\}$, representing different potential response strategies to the input information.

Let $p_A(\mathbf{z})$ be the probability that the LLM agent $a_i$ with input $\mathbf{z}$ produces a decision in region $R_A$ (its most likely decision region), and $p_B(\mathbf{z})$ be the probability it produces a decision in region $R_B$ (the second most likely region):
\begin{equation}
    p_A(\mathbf{z}) = \mathbb{P}(\Phi_i(\mathbf{z} + \epsilon) \in R_A) = \max_{j} \mathbb{P}(\Phi_i(\mathbf{z} + \epsilon) \in R_j)
\end{equation}
\begin{equation}
    p_B(\mathbf{z}) = \max_{j: R_j \neq R_A} \mathbb{P}(\Phi_i(\mathbf{z} + \epsilon) \in R_j)
\end{equation}

\begin{theorem}[Decision Stability Under Perturbed Information]
For an LLM-driven agent in a MAS using randomized smoothing with Gaussian noise $\mathcal{N}(0, \sigma^2I)$, if perturbations in the information received from neighbors are bounded by $\|\boldsymbol{\delta}\|_2$, where:
\begin{equation}
    \|\boldsymbol{\delta}\|_2 < \frac{\sigma}{2}(\Psi^{-1}(\underline{p_A}) - \Psi^{-1}(\overline{p_B}))
\end{equation}
then the agent's decision region remains stable despite the perturbed information, where $\underline{p_A}$ is a lower bound on the probability of the most likely decision region, $\overline{p_B}$ is an upper bound on the probability of the second most likely decision region, and $\Psi$ is the standard normal cumulative distribution function.
\end{theorem}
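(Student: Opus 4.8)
The plan is to adapt the certified-radius argument of Cohen et al.\ \cite{cohen2019certified} to the continuous-state setting, treating each decision region $R_j$ as a measurable ``class'' so that the event $\{\Phi_i(\mathbf{z}+\boldsymbol{\epsilon}) \in R_j\}$ plays the role of a classifier output. The smoothed agent at a clean input $\mathbf{z}$ returns $R_A$ precisely because $p_A(\mathbf{z}) = \max_j \mathbb{P}(\Phi_i(\mathbf{z}+\boldsymbol{\epsilon}) \in R_j)$. To prove stability I would show that for every perturbation with $\|\boldsymbol{\delta}\|_2$ below the stated radius, the shifted probability of $R_A$ still dominates the shifted probability of every competing region, so the argmax---and hence the selected decision region---is unchanged.

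First I would isolate the two-Gaussian comparison at the heart of the bound. Writing $X \sim \mathcal{N}(\mathbf{z}, \sigma^2 I)$ for the clean noise and $Y \sim \mathcal{N}(\mathbf{z}+\boldsymbol{\delta}, \sigma^2 I)$ for the perturbed noise, the quantities of interest are the masses of the preimage set $\Phi_i^{-1}(R_A)$ under each law. The key step is the Neyman--Pearson lemma: among all measurable sets $S$ with fixed mass $\mathbb{P}(X \in S)$ under the first Gaussian, the set minimizing $\mathbb{P}(Y \in S)$ is a half-space orthogonal to $\boldsymbol{\delta}$, since the likelihood ratio $dP_Y/dP_X$ is monotone in the coordinate along $\boldsymbol{\delta}$. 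Evaluating the Gaussian mass of this worst-case half-space reduces to a one-dimensional computation and yields
\[
    \mathbb{P}(Y \in S) \geq \Psi\!\left(\Psi^{-1}(\mathbb{P}(X \in S)) - \frac{\|\boldsymbol{\delta}\|_2}{\sigma}\right),
\]
with a symmetric argument giving the matching upper bound carrying $+\|\boldsymbol{\delta}\|_2/\sigma$.

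Applying the lower bound to $R_A$ with $\mathbb{P}(X \in \Phi_i^{-1}(R_A)) \geq \underline{p_A}$, and the upper bound to the runner-up region $R_B$ with $\mathbb{P}(X \in \Phi_i^{-1}(R_B)) \leq \overline{p_B}$, I would obtain
\[
    \mathbb{P}(Y \in \Phi_i^{-1}(R_A)) \geq \Psi\!\left(\Psi^{-1}(\underline{p_A}) - \tfrac{\|\boldsymbol{\delta}\|_2}{\sigma}\right), \quad \mathbb{P}(Y \in \Phi_i^{-1}(R_B)) \leq \Psi\!\left(\Psi^{-1}(\overline{p_B}) + \tfrac{\|\boldsymbol{\delta}\|_2}{\sigma}\right).
\]
Stability of the decision region requires the former to exceed the latter for every competitor; because $\Psi$ is strictly increasing, this reduces to $\Psi^{-1}(\underline{p_A}) - \|\boldsymbol{\delta}\|_2/\sigma > \Psi^{-1}(\overline{p_B}) + \|\boldsymbol{\delta}\|_2/\sigma$, which rearranges exactly into the claimed radius $\|\boldsymbol{\delta}\|_2 < \frac{\sigma}{2}(\Psi^{-1}(\underline{p_A}) - \Psi^{-1}(\overline{p_B}))$.

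The main obstacle will be the Neyman--Pearson step: rigorously certifying that the half-space is the true worst case over arbitrary decision regions, and handling the measurability and argmax-tie technicalities that arise when the continuous state space is partitioned into more than two regions. I would dispatch the multi-region issue by observing that it suffices to beat the single largest competitor $\overline{p_B}$, since controlling the runner-up simultaneously controls every other region; the remaining care is in ensuring the preimages $\Phi_i^{-1}(R_j)$ are measurable so that the Gaussian integrals and the likelihood-ratio ordering are well defined.
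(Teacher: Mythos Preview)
Your proposal is correct and follows essentially the same route as the paper's proof: both reduce the continuous decision-region problem to a two-Gaussian comparison via Neyman--Pearson, obtain the matching lower and upper bounds $\Psi(\Psi^{-1}(\underline{p_A}) - \|\boldsymbol{\delta}\|_2/\sigma)$ and $\Psi(\Psi^{-1}(\overline{p_B}) + \|\boldsymbol{\delta}\|_2/\sigma)$, and then invoke the monotonicity of $\Psi$ to rearrange into the certified radius. If anything, your treatment is slightly more explicit about why the worst-case set is a half-space (the monotone likelihood ratio along $\boldsymbol{\delta}$) and about the measurability and multi-region caveats, whereas the paper defers those details to the Cohen et al.\ reference.
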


\begin{proof}
The key insight lies in quantifying how adversarial perturbations in the information exchanged between agents affect decision-making. Let $\mathbf{u = z + \boldsymbol{\delta}}$ represent the scenario where an agent receives perturbed information from its neighbors, with $\boldsymbol{\delta}$ representing the perturbation vector (either from malicious manipulation or hallucination). For any decision region $R$, the probability under perturbed information is:
\begin{align}
\mathbb{P}(\Phi_i(\mathbf{u} + \epsilon) \in R) = \mathbb{P}(\Phi_i(\mathbf{z} + \boldsymbol{\delta} + \epsilon) \in R)
\end{align}

Although our state space is continuous, we can reformulate our problem as a binary hypothesis testing problem to apply the Neyman-Pearson lemma. Specifically, for each decision region $R$, we define a binary case:
\begin{align}
h_R(\mathbf{x}) = 
\begin{cases} 
1, & \text{if}\ \Phi_i(\mathbf{x}) \in R \\
0, & \text{otherwise}
\end{cases}
\end{align}

This transforms our continuous state space problem into multiple binary classification problems, one for each decision region. For the most likely region $R_A$, we consider the binary hypothesis test:
\begin{align}
H_0 &: \mathbf{x} \sim \mathcal{N}(\mathbf{z}, \sigma^2 I) \\
H_1 &: \mathbf{x} \sim \mathcal{N}(\mathbf{z} + \boldsymbol{\delta}, \sigma^2 I)
\end{align}

Under the null hypothesis $H_0$, we have $\mathbb{P}(h_{R_A}(\mathbf{x}) = 1) = \mathbb{P}(\Phi_i(\mathbf{z} + \epsilon) \in R_A) = \underline{p_A}$. The Neyman-Pearson lemma helps us determine the most powerful test for detecting the shift from $H_0$ to $H_1$, and consequently provides bounds on how probabilities change under adversarial perturbations. By applying the Neyman-Pearson lemma \cite{neyman1933ix} and analyzing worst-case shifts in decision probabilities, we can establish the following bounds based on proof in \cite{cohen2019certified}:
\begin{align}
\mathbb{P}(\Phi_i(\mathbf{u} + \epsilon) \in R_A) &\geq \Psi(\Psi^{-1}(\underline{p_A}) - \|\boldsymbol{\delta}\|_2/\sigma) \\
\mathbb{P}(\Phi_i(\mathbf{u} + \epsilon) \in R_B) &\leq \Psi(\Psi^{-1}(\overline{p_B}) + \|\boldsymbol{\delta}\|_2/\sigma)
\end{align}

These bounds are derived from analyzing how the addition of the perturbation vector $\boldsymbol{\delta}$ affects the probability distribution of the agent's decisions. When considering the worst-case scenario, the perturbation vector is oriented to maximally decrease the probability of region $R_A$ and maximally increase the probability of region $R_B$. For the agent to maintain its original decision region despite receiving perturbed information, we require:
\begin{align}
\Psi(\Psi^{-1}(\underline{p_A}) - \|\boldsymbol{\delta}\|_2/\sigma) &> \Psi(\Psi^{-1}(\overline{p_B}) + \|\boldsymbol{\delta}\|_2/\sigma)
\end{align}

Since $\Psi$ is a strictly increasing function, this inequality is equivalent to:
\begin{align}
\Psi^{-1}(\underline{p_A}) - \|\boldsymbol{\delta}\|_2/\sigma &> \Psi^{-1}(\overline{p_B}) + \|\boldsymbol{\delta}\|_2/\sigma
\end{align}

Rearranging to isolate $\|\boldsymbol{\delta}\|_2$, we get:
\begin{align}
\|\boldsymbol{\delta}\|_2 < \frac{\sigma}{2}(\Psi^{-1}(\underline{p_A}) - \Psi^{-1}(\overline{p_B}))
\end{align}

This establishes our certification radius $r = \frac{\sigma}{2}(\Psi^{-1}(\underline{p_A}) - \Psi^{-1}(\overline{p_B}))$, providing a conservative guarantee on how much perturbation in the information received from neighbors an agent can tolerate while maintaining consistent decision-making.
\end{proof}

\subsection{From Local to Global Perspectives}
Having established that randomized smoothing stabilizes each agent's decisions under perturbations, we now extend these local guarantees to the global system. Our analysis demonstrates how individual robustness translates into network-wide resilience against both malicious attacks and hallucinations.

\subsubsection{Intuitive Understanding for Misinformation Mitigation}
The fundamental insight connecting local and global robustness lies in viewing both malicious misinformation and LLM hallucinations as sources of noise injected into the consensus process. Just as Gaussian perturbations are deliberately added to inputs during smoothing, a malicious behavior or a hallucinated output can be unitedly conceptualized as a "noise sample" that deviates from the truth. The key mechanism of randomized smoothing is that by aggregating multiple samples under controlled perturbations, an agent can differentiate between consistent signals and outliers.

When an agent is repeatedly queried its LLM-based policy under slight random perturbations, \textit{the law of large numbers ensures that the trimmed mean output converges to the true intended output, effectively filtering out inconsistent signals}. Consequently, even if an adversary injects false information or an agent momentarily hallucinates, a smoothing procedure with sufficient samples will dilute their influence. This stabilizes each agent's decision-making process—a single perturbed input becomes unlikely to significantly influence an agent's final action.

This local stability translates into global error mitigation: agent $a_i$ can identify both its own hallucinations and inconsistencies in its neighbors' reports through the smoothing mechanism, which effectively treats anomalous reports as noise to be filtered out. When faced with deceptive information from malicious agents, the repeated querying process of randomized smoothing significantly reduces the likelihood of successful manipulation, as false information would need to remain consistent across multiple perturbed queries to evade detection. Consequently, local smoothing at the individual agent level creates a network-wide dampening effect on error propagation, effectively preventing the cascade of misinformation throughout the multi-agent system.

\subsubsection{Unified View of Perturbations}
Mathematically, we formalize this intuition by treating perturbations—whether from malicious agents or hallucinations — as deviations from ideal input information. For agent $a_i$ receiving information at time $t$: $z_i(t) = [x_i(t), \{x_j(t) | j \in N_i\}]$. When perturbations exist, the actual received information becomes $z_i'(t) = z_i(t) + \boldsymbol{\delta}_i(t)$ where $\boldsymbol{\delta}_i(t)$ represents the perturbation vector from either source. This unified formulation facilitates addressing both threat categories through a single defensive framework.

\subsubsection{Network Perturbation Propagation and Attenuation}
\begin{proposition}[Network Perturbation Attenuation]
In multi-agent networks employing randomized smoothing defenses, information perturbations progressively attenuate during propagation through the network.
\end{proposition}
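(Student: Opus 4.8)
The plan is to formalize the informal ``attenuation'' claim as a per-step contraction estimate on the propagating perturbation, then iterate it across the network. First I would define, for each agent $a_i$ at time $t$, a scalar perturbation bound $\Delta_i(t) = \|\boldsymbol{\delta}_i(t)\|_2$ capturing the deviation of its received input $z_i'(t)$ from the ideal $z_i(t)$, together with the induced output deviation $e_i(t) = \|\hat{\mathcal{S}}_i(z_i'(t)) - \hat{\mathcal{S}}_i(z_i(t))\|_2$. The objective is to show that $e_i(t)$ is strictly smaller than the aggregate input perturbation feeding $a_i$, and that this per-step shrinkage compounds into network-wide geometric decay.

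The core step is a per-agent attenuation lemma asserting that the smoothing-plus-trimmed-mean operator $\hat{\mathcal{S}}_i$ is a strict contraction on perturbations, i.e. $e_i(t) \le \gamma\,\Delta_i(t)$ for some factor $\gamma < 1$. I would extract $\gamma$ from two ingredients. First, the Gaussian smoothing of Eq.~\eqref{eq:smooth function} is Lipschitz, which is exactly the mechanism behind the certified radius of the preceding Theorem: any input perturbation with $\|\boldsymbol{\delta}\|_2$ below $r = \tfrac{\sigma}{2}(\Psi^{-1}(\underline{p_A})-\Psi^{-1}(\overline{p_B}))$ leaves the decision region --- and hence the chosen output --- unchanged. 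Second, for perturbations that corrupt only a minority of the $m$ Monte Carlo samples, the trimmed mean discards the extreme (corrupted) draws: if the trim fraction exceeds the expected corruption frequency, bounded by $p_k$ and $p_h$ under the threat model of Eqs.~\eqref{eq:malicious} and~\eqref{eq:hallucination}, then with high probability only clean samples survive and the residual deviation shrinks like $O(1/\sqrt{m})$. Combining the in-radius invariance with this outlier rejection yields the contraction factor $\gamma$.

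Next I would lift the local estimate to the network by writing the propagation recursion. Since agent $a_i$ forms its next input from the already-smoothed reports of its neighbors $\mathcal{N}_i$, the input perturbation at step $t+1$ is a convex combination of neighbor output deviations plus any freshly injected perturbation, giving, in componentwise vector form, $\boldsymbol{\Delta}(t+1) \le \gamma\, W\, \boldsymbol{\Delta}(t) + \mathbf{s}(t)$, where $W$ is the row-stochastic weight matrix induced by the communication graph $\mathcal{G}$ and $\mathbf{s}(t)$ is the source term from actively malicious or hallucinating agents. Because $W$ is row-stochastic its spectral radius equals $1$, so $\gamma W$ has spectral radius at most $\gamma < 1$; iterating the recursion then shows that a perturbation injected at one node decays geometrically by $\gamma$ with each additional hop and time step, and that the steady-state perturbation is bounded by $(I-\gamma W)^{-1}\mathbf{s}$, i.e. the injected error scaled by $1/(1-\gamma)$ rather than amplified. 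This is precisely the claimed progressive attenuation.

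The main obstacle I anticipate is establishing $\gamma < 1$ rigorously rather than heuristically. The Lipschitz constant of Gaussian smoothing scales like $1/\sigma$, which need not fall below one, so contraction cannot come from smoothness alone; it must be supplied by the trimmed mean's outlier rejection, and making that quantitative requires a concentration argument that bounds the number of corrupted samples below the trim budget with high probability. Consequently the guarantee will be probabilistic: the recursion and its geometric decay hold on the high-probability event that corruption stays within the trimming threshold at every agent and every step, so the final statement is best phrased as attenuation \emph{with high probability} (or in expectation) rather than deterministically.
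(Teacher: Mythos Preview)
Your proposal is considerably more elaborate than the paper's own argument, which is brief and heuristic. The paper does not attempt a contraction lemma via Lipschitz properties of the smoothed operator, a trimmed-mean concentration bound, or any row-stochastic matrix / spectral-radius machinery. Instead it simply fixes a linear propagation path $a_1 \to a_2 \to \cdots \to a_k$ and \emph{posits} that each agent $a_i$ attenuates the incoming perturbation by a multiplicative factor $\bigl(1 - \Psi(r_i/\sigma)\bigr)$, where $r_i$ is that agent's certification radius from Theorem~1; the quantity $\Psi(r_i/\sigma)$ is interpreted informally as the ``probability of perturbation classification and filtration.'' Multiplying along the path gives $\|\boldsymbol{\delta}_{out,k}\| \approx \|\boldsymbol{\delta}_1\|\prod_{i=1}^{k}\bigl(1 - \Psi(r_i/\sigma)\bigr)$, and since each factor is strictly less than one, exponential decay is declared.

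Your route, if the per-agent contraction $\gamma < 1$ could actually be nailed down, would yield a genuinely rigorous statement covering arbitrary graph topologies and persistent injection (via the $(I-\gamma W)^{-1}\mathbf{s}$ steady-state bound), and it is honest about where the probabilistic caveats enter. The paper's route is shorter and ties the attenuation factor directly and explicitly to the certified radius, but it stays at the level of an approximation (note the $\approx$ signs) and treats only a single path rather than the full network recursion. You correctly flag the real obstacle---smoothing alone does not give a contraction constant below one---and the paper essentially sidesteps that issue by asserting the filtration-probability interpretation rather than deriving it.
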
 

Consider a perturbation propagation path: $a_1 \to a_2 \to ... \to a_k$. With initial perturbation $\boldsymbol{\delta}_1$ at agent $a_1$, randomized smoothing attenuates the output perturbation to:
\begin{equation}
\|\boldsymbol{\delta}_{out,1}\| \approx \|\boldsymbol{\delta}_1\| \cdot (1 - \Psi(\frac{r_1}{\sigma}))
\end{equation} 
The term $\Psi(\frac{r_1}{\sigma})$ quantifies the probability of perturbation classification and filtration by agent $a_1$, where $r_1$ denotes the certification radius and $\sigma$ represents the noise standard deviation; a higher value indicates enhanced capacity for detecting and eliminating adversarial disturbances. As this attenuated perturbation propagates through subsequent agents, the perturbation reaching the $k$-th agent becomes:
\begin{equation}
\|\boldsymbol{\delta}_{out,k}\| \approx \|\boldsymbol{\delta}_1\| \cdot \prod_{i=1}^{k} (1 - \Psi(\frac{r_i}{\sigma}))
\end{equation} 
Since each factor $(1 - \Psi(\frac{r_i}{\sigma})) < 1$, the perturbation magnitude undergoes exponential decay, effectively preventing error amplification and cascading effects throughout the network.

\subsubsection{Tolerance to Malicious Agents}
\begin{proposition}[Malicious Tolerance]
Randomized smoothing enhance the \textit{systemic tolerance} to malicious agents.
\end{proposition}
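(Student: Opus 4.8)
The statement is qualitative, so the plan is first to pin down what \emph{systemic tolerance} means and then to derive a quantitative bound that is visibly larger under smoothing. I would define the tolerance $f^\ast$ as the largest number of agents in $\mathcal{A}_{\text{mal}}$—or, more usefully, the largest per-neighbourhood malicious fraction—for which every honest agent still converges into an $\varepsilon$-neighbourhood of the common value in the sense of Eq.~\eqref{eq:perfect consensus}. The goal is then to exhibit $f^\ast_{\mathrm{RS}} > f^\ast_{0}$, where $f^\ast_{0}$ is the tolerance of an undefended baseline that ingests neighbour reports $\hat{x}_j(t)$ verbatim.

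First, I would invoke Theorem~1 at the level of a single honest agent $a_i$: its decision region is certified stable whenever the aggregate perturbation it receives obeys $\|\boldsymbol{\delta}\|_2 < r$, with $r = \tfrac{\sigma}{2}(\Psi^{-1}(\underline{p_A}) - \Psi^{-1}(\overline{p_B}))$. Since each malicious neighbour contributes a displacement bounded by $\delta_{max}$ (Eq.~\eqref{eq:malicious}), this already converts the certified radius into a crude count $\lfloor r/\delta_{max}\rfloor$ of simultaneously tolerable malicious neighbours. I would sharpen this using the trim-mean aggregator $\hat{\mathcal{S}}_i$: treating trimming with fraction $\beta$ as a robust estimator, I would show that if the number of malicious neighbours of $a_i$ is below the breakdown count $\lfloor \beta\,|\mathcal{N}_i|\rfloor$, the trimmed mean stays inside the convex hull of the \emph{honest} verified states, so the malicious displacements are discarded before they enter the update in Eq.~\eqref{eq:state_update}.

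Next I would account for the probabilistic nature of the attack and then lift the per-agent guarantee to the network. Because a malicious agent deviates only with probability $p_k(t)$ and must keep its deceptive value consistent across all $m$ independently perturbed queries of $\mathcal{S}_j$ to survive verification, a concentration/union-bound argument should bound the probability of a successful single-hop attack by a quantity decaying in $m$. Any perturbation that does survive is then handed to Proposition~1, whose attenuation factor $\prod_{i}(1-\Psi(r_i/\sigma))<1$ forces geometric decay along every propagation path. Combining these, I would conclude that, as long as each neighbourhood's malicious fraction stays below $\beta$ and $\delta_{max}<r$, the honest states contract into an $\varepsilon$-ball, giving $f^\ast_{\mathrm{RS}}\ge \min_i\lfloor\beta\,|\mathcal{N}_i|\rfloor$, which strictly exceeds the undefended $f^\ast_{0}$ (where even one persistent malicious report can bias the weighted update), thereby establishing enhanced tolerance.

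The main obstacle I anticipate is the passage from the per-agent, per-query probabilistic guarantees to a clean global tolerance statement that holds \emph{uniformly in time}. The difficulty is the coupling of three independent sources of randomness—the stochastic activation $p_k(t)$ of malicious behaviour, the LLM's own sampling noise, and the injected Gaussian $\boldsymbol{\epsilon}$—with the graph topology that dictates how surviving perturbations reach each honest agent. Controlling the \emph{joint} failure probability across all agents and all iterations without the error events compounding will require a careful union bound married to the geometric attenuation of Proposition~1, together with an argument that the trim-mean breakdown point composes under repeated aggregation; making that composition rigorous, rather than merely heuristic, is where the real work lies.
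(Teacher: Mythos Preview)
Your proposal is substantially more ambitious than what the paper itself provides. The paper does not give a formal proof of Proposition~2 at all; immediately after the statement it simply asserts that the tolerable malicious fraction is proportional to the ratio of the minimum certification radius to the maximum malicious perturbation,
\[
\frac{|A_{\text{mal}}|}{|A|} \;\propto\; \frac{r_{\min}}{\|\boldsymbol{\delta}_{mal}\|_{\max}},
\]
with no derivation, no appeal to trimming, and no probabilistic bookkeeping. Your first step---using Theorem~1 to convert the certified radius into a count $\lfloor r/\delta_{\max}\rfloor$ of simultaneously tolerable malicious neighbours---is precisely the heuristic behind that proportionality, so you have correctly isolated the paper's intended mechanism.

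Everything else in your plan (the trim-mean breakdown point $\lfloor \beta|\mathcal{N}_i|\rfloor$, the concentration argument over the $m$ perturbed queries, the composition with Proposition~1's attenuation factors, and the uniform-in-time union bound you worry about at the end) goes well beyond the paper's treatment. These additions would upgrade the paper's bare proportionality claim into a genuine theorem with an explicit tolerance threshold, which is a real improvement; but for the narrow purpose of reproducing the paper's argument you only need the first paragraph of your plan. The obstacle you anticipate---controlling the joint failure probability across agents and iterations---never arises in the paper, because the paper never attempts that level of rigour.
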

Given a subset of malicious agents within a system of $n$ agents, each introducing perturbation vector $\boldsymbol{\delta}_{mal}$, when non-malicious agents employ randomized smoothing, the system's tolerance to malicious agents exhibits proportionality to the certification radius:
\begin{equation}
\frac{|A_{\text{mal}}|}{|A|} \propto \frac{r_{\text{min}}}{\|\boldsymbol{\delta}_{mal}\|_{max}}
\end{equation}
where $r_{\text{min}}$ is the minimum certification radius in the system, and $\|\boldsymbol{\delta}_{mal}\|_{max}$ denotes the maximum malicious perturbation.

\section{Numerical Simulation} \label{simulation}
\begin{figure*}[htbp]
    \centering
    \subfigure[Ideal consensus with no malicious agents]{
        \includegraphics[width=0.32\textwidth]{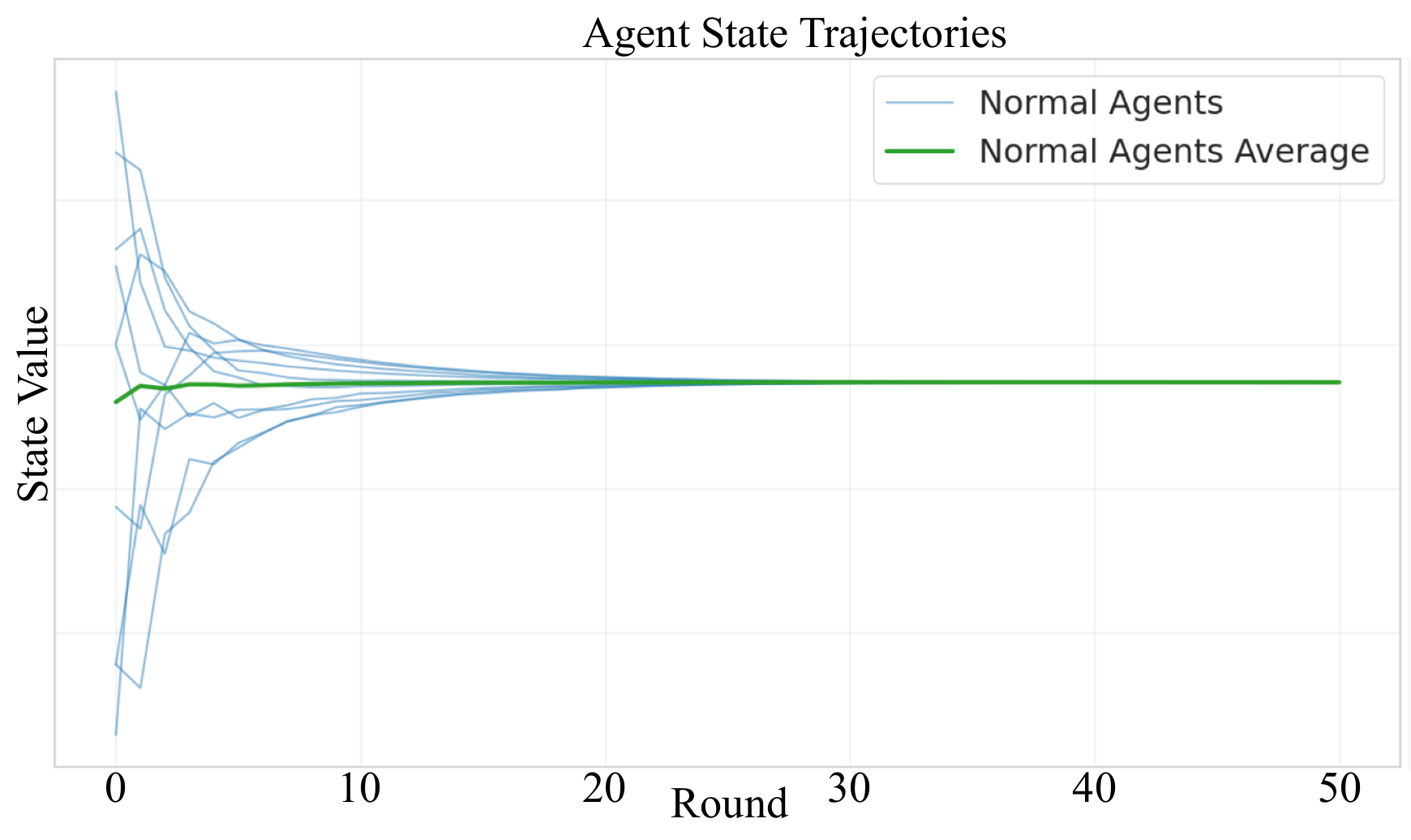} 
    }
    \subfigure[Malicious agents without defense mechanisms]{
        \includegraphics[width=0.32\textwidth]{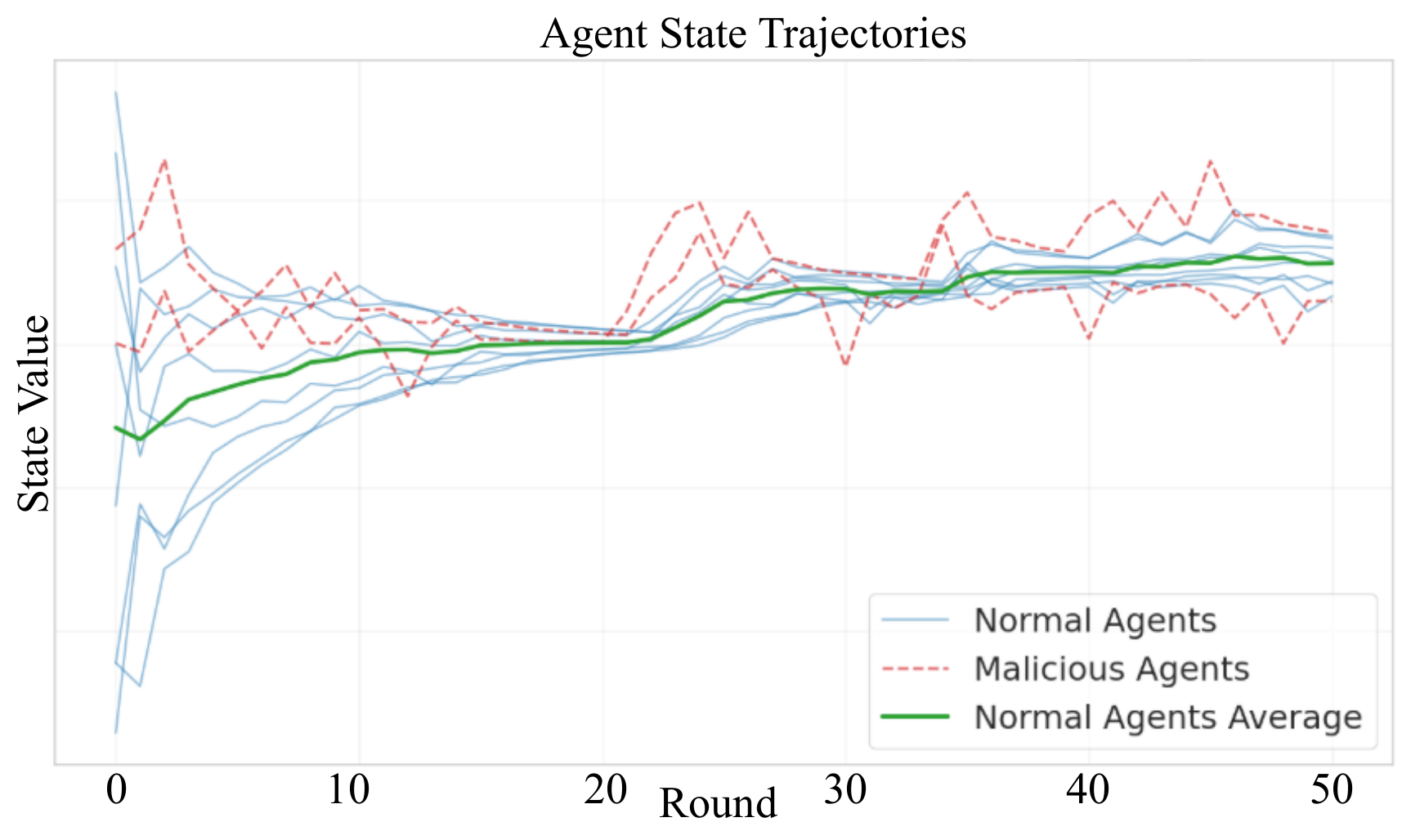}
    }
    \subfigure[Malicious agents present with defense]{
        \includegraphics[width=0.32\textwidth]{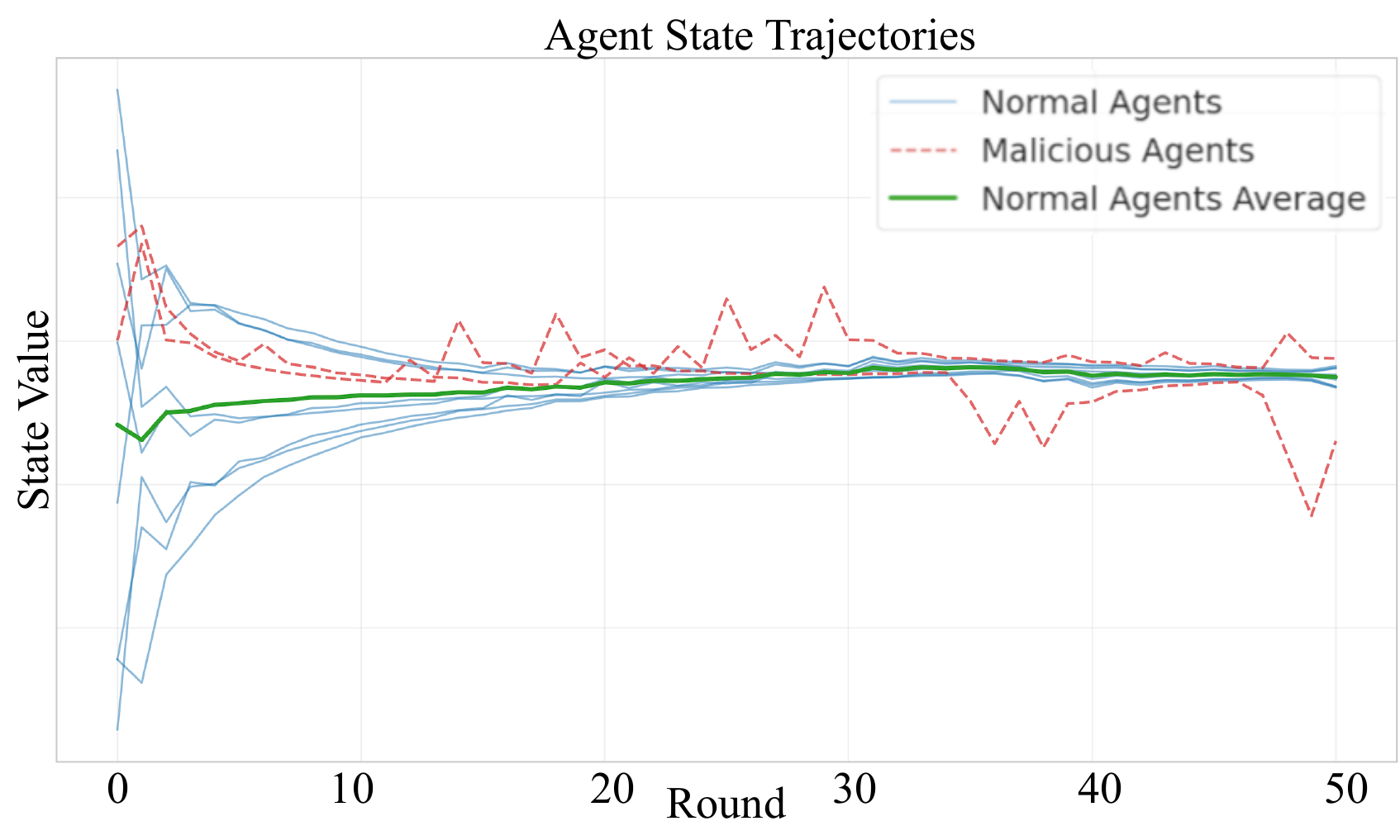}
    }
    \caption{Temporal Evolution of Agent State Trajectories under Multiple Scenarios}
    \label{fig:Agent State Trajectories}
\end{figure*}

\begin{figure*}[htbp]
    \centering
    \subfigure[]{
        \includegraphics[width=0.32\textwidth]{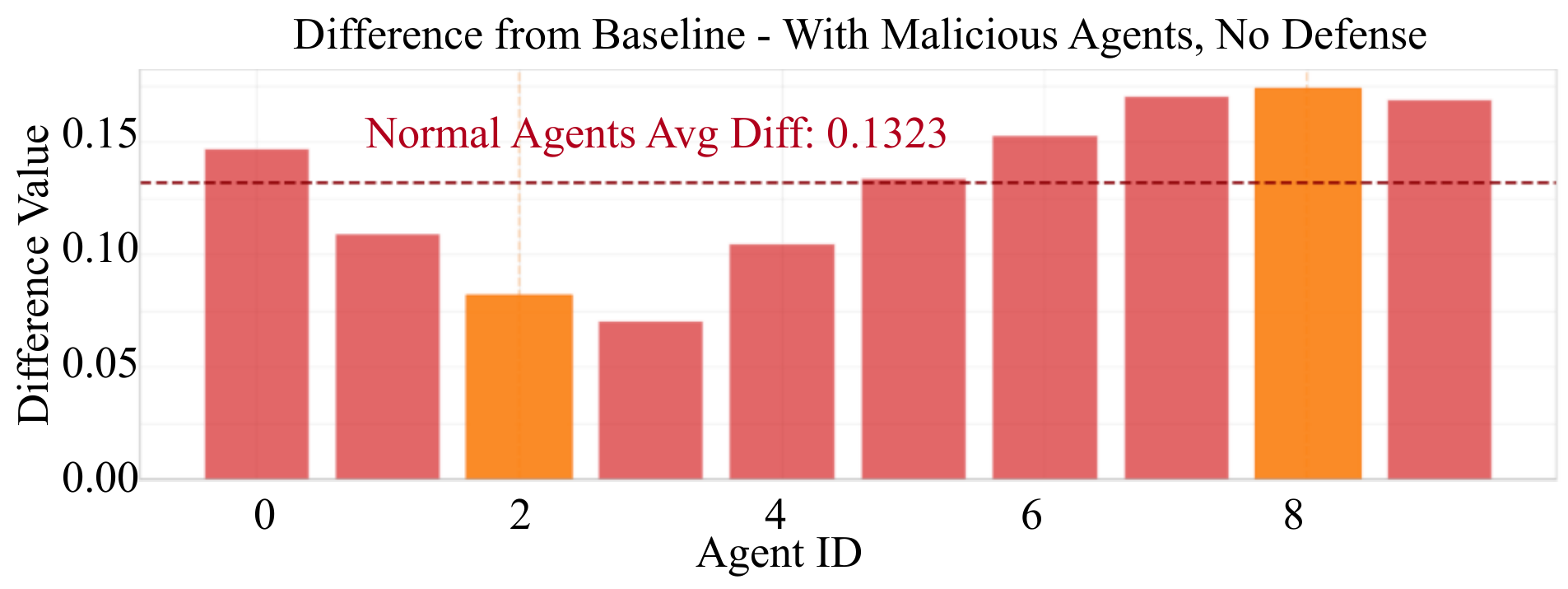} 
    }
    \subfigure[]{
        \includegraphics[width=0.32\textwidth]{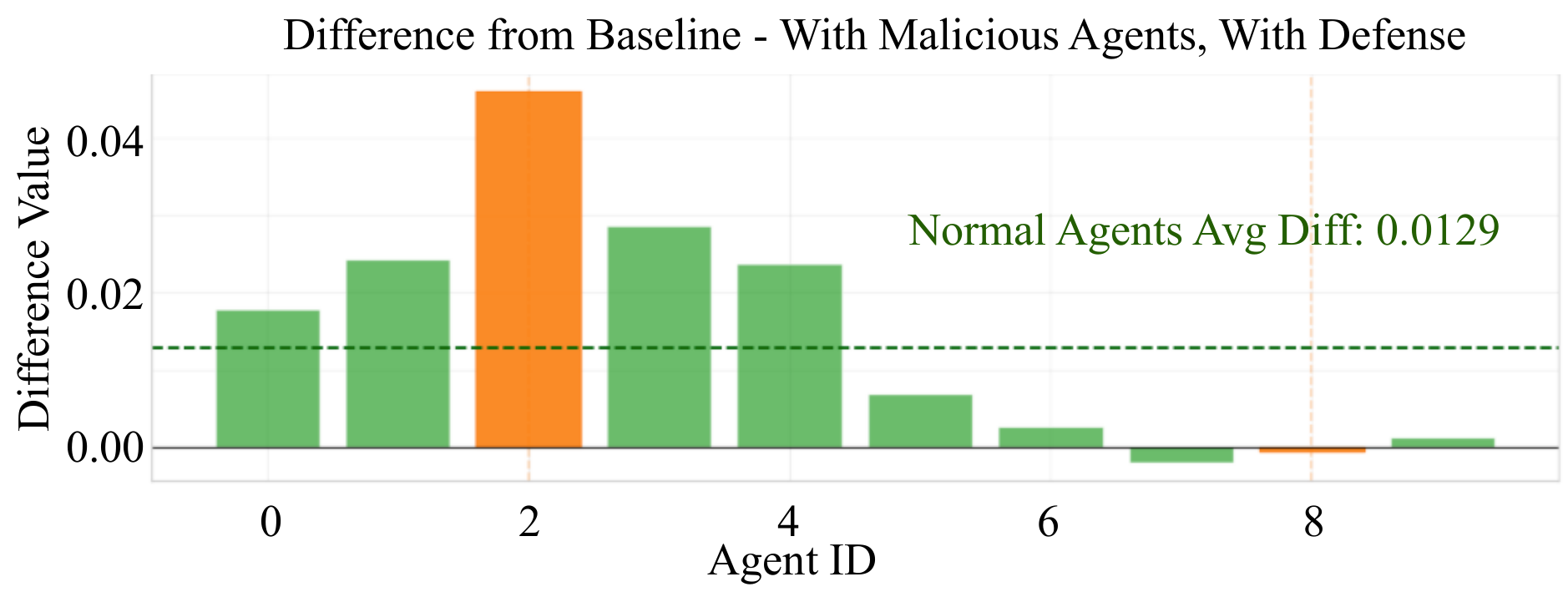}
    }
    \subfigure[]{
        \includegraphics[width=0.32\textwidth]{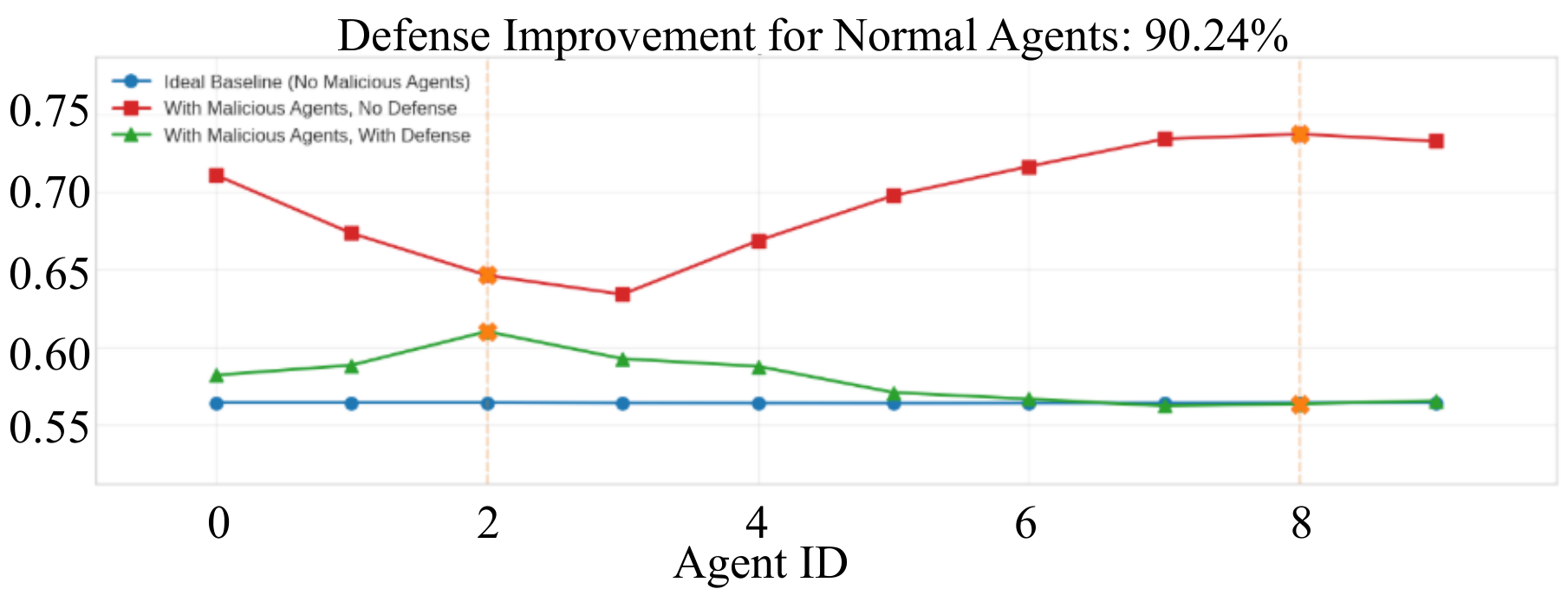}
    }
    \caption{Quantitative Assessment of Defense Performance Across LLM-Driven Agent Groups}
    \label{fig:Quantitative Assessment}
\end{figure*}
\subsection{Simulation Setup} \label{sec:Setup}
To evaluate our proposed randomized smoothing defense strategy in the context of cooperative decision and control of intelligent group systems, we implemented a ring-structured MAS with $n$ LLM-driven agents ($n = 10$ in our configuration), which provides a controlled environment to examine consensus-seeking behavior under adversarial conditions. In this topology, each agent $a_i$ communicate only with its adjacent neighbors, forming a directed graph that reflects the communication constraints typical of formation-flying UAVs and other distributed unmanned systems.

Each agent maintains a state $x_i(t) \in [0, 1]$ representing an abstraction of its position or decision value within the intelligent group. Agents are initialized with fixed values to ensure experimental reproducibility. The ideal consensus objective follows Eq.~(\ref{eq:perfect consensus}), requiring all agents in the intelligent group to converge to a common value as $t \rightarrow \infty$, which is essential for cooperative decision and control tasks.

The agent decision-making process is implemented using GPT-3.5-Turbo as the underlying LLM function $\Phi_i$, which facilitates computing state updates in accordance with Eq.~(\ref{eq:state_update}). This represents a novel approach to cooperative decision-making in intelligent group systems. Following our threat model formulation, we adopt a unified view of perturbations by treating malicious manipulations and hallucinations as manifestations of misinformation within the system. Specifically, we implement a misinformation probability that combines the effects of manipulated information from potentially compromised agents in $\mathcal{A}_{\text{mal}} \subset \mathcal{A}$ (as formalized in Eq.~(\ref{eq:malicious})) and the naturally occurring hallucinations inherent to LLM-driven processes is also kept (as defined in Eq.~(\ref{eq:hallucination})). The simulation operates with parallel agent updates to accurately capture the concurrent nature of distributed autonomous systems in intelligent group operations. 

\subsection{Performance Metrics}
To quantitatively evaluate the effectiveness of our randomized smoothing defense framework, we employ several complementary metrics that measure the system's resilience to malicious influences across multiple dimensions.

\subsubsection{State Deviation Metrics}
The primary assessment criterion is the deviation between agent states in different experimental scenarios for each agent $i$. 
\begin{equation}
\Delta_i^{\text{no-def}} = x_i^{\text{no-def}} - x_i^{\text{baseline}}
\end{equation}
\begin{equation}
\Delta_i^{\text{def}} = x_i^{\text{def}} - x_i^{\text{baseline}}
\end{equation}
where $x_i^{\text{no-def}}$ and $x_i^{\text{def}}$ represent the agent's final state in scenarios with malicious agents (without and with defense, respectively), and $x_i^{\text{baseline}}$ is the state in the ideal baseline scenario (no malicious agents). The absolute values of these deviations, $|\Delta_i^{\text{no-def}}|$ and $|\Delta_i^{\text{def}}|$, provide a measure of how far each agent deviates from ideal behavior.

\subsubsection{Normal Agent Performance}
Since our primary concern is protecting normal agents from malicious influences, we focus particularly on the average deviation across all normal agents:
\begin{equation}
\bar{\Delta}_{\text{normal}}^{\text{no-def}} = \frac{1}{|\mathcal{A}_{\text{normal}}|}\sum_{i \in \mathcal{A}_{\text{normal}}}|\Delta_i^{\text{no-def}}|
\end{equation}
\begin{equation}
\bar{\Delta}_{\text{normal}}^{\text{def}} = \frac{1}{|\mathcal{A}_{\text{normal}}|}\sum_{i \in \mathcal{A}_{\text{normal}}}|\Delta_i^{\text{def}}|
\end{equation}
where $\mathcal{A}_{\text{normal}}$ represents the set of normal agents. Lower values indicate better defense performance.

\subsubsection{Defense Improvement Percentage}
To quantify the benefit of our defense mechanism, we calculate the percentage reduction in deviation by the defense:
\begin{equation}
\text{Improvement (\%)} = \frac{\bar{\Delta}_{\text{normal}}^{\text{no-def}} - \bar{\Delta}_{\text{normal}}^{\text{def}}}{\bar{\Delta}_{\text{normal}}^{\text{no-def}}} \times 100\%
\end{equation}
This metric provides an intuitive measure of defense effectiveness, with higher values indicating greater improvement.


\subsection{Comparative Analysis of Defense Effectiveness}
This section evaluates the performance of our proposed randomized smoothing defense framework through comprehensive simulations across multiple scenarios. Following the experimental setup described in Section \ref{sec:Setup}, we establish three comparative scenarios: (1) an ideal baseline with all benign agents, (2) a \gls{MAS} system with malicious agents but no defense, and (3) the same \gls{MAS} system with malicious agents but with our proposed defense strategy.

Figure \ref{fig:Agent State Trajectories} illustrates the state trajectories of cooperative agents under three distinct scenarios. In scenario (a), all agents are benign, resulting in smooth convergence to consensus, confirming that LLM-driven agent groups can achieve excellent cooperative decision-making. Scenario (b) demonstrates how malicious agents (red) significantly disrupt this process, causing persistent oscillations and preventing a stable agreement, with normal agents' average state (green line) diverging considerably from the ideal baseline. Scenario (c) showcases our randomized smoothing defense mechanism's effectiveness, which successfully mitigates the influence of identical malicious agents, resulting in more stable trajectories with reduced oscillations and an average state closely approximating the ideal baseline and confirming the framework's capability to preserve cooperative decision-making integrity under adversarial conditions. 

The quantitative assessment presented in Figure \ref{fig:Quantitative Assessment} provides clear evidence of our defense mechanism's efficacy. Panel (a) illustrates the significant deviation of individual agents from the ideal baseline when malicious agents are present without defense measures, with normal agents exhibiting an average deviation of 0.1251 units. In stark contrast, panel (b) demonstrates how the implementation of randomized smoothing substantially reduces these deviations, with most agents showing markedly lower disparities from baseline states (average deviation among normal agents decreases to merely 0.0129 units). Panel (c) highlights the defense improvement metrics, revealing a 90.24\% average reduction in deviation across the entire intelligent agent group. These comparative results empirically validate our theoretical guarantees, confirming that the probabilistic certification provided by randomized smoothing effectively bounds the impact of perturbations in LLM-driven cooperative decision-making systems.

\subsection{Aerospace Simulation}
\begin{figure*}[htbp!]
    \centering
    \includegraphics[width=0.93\textwidth]{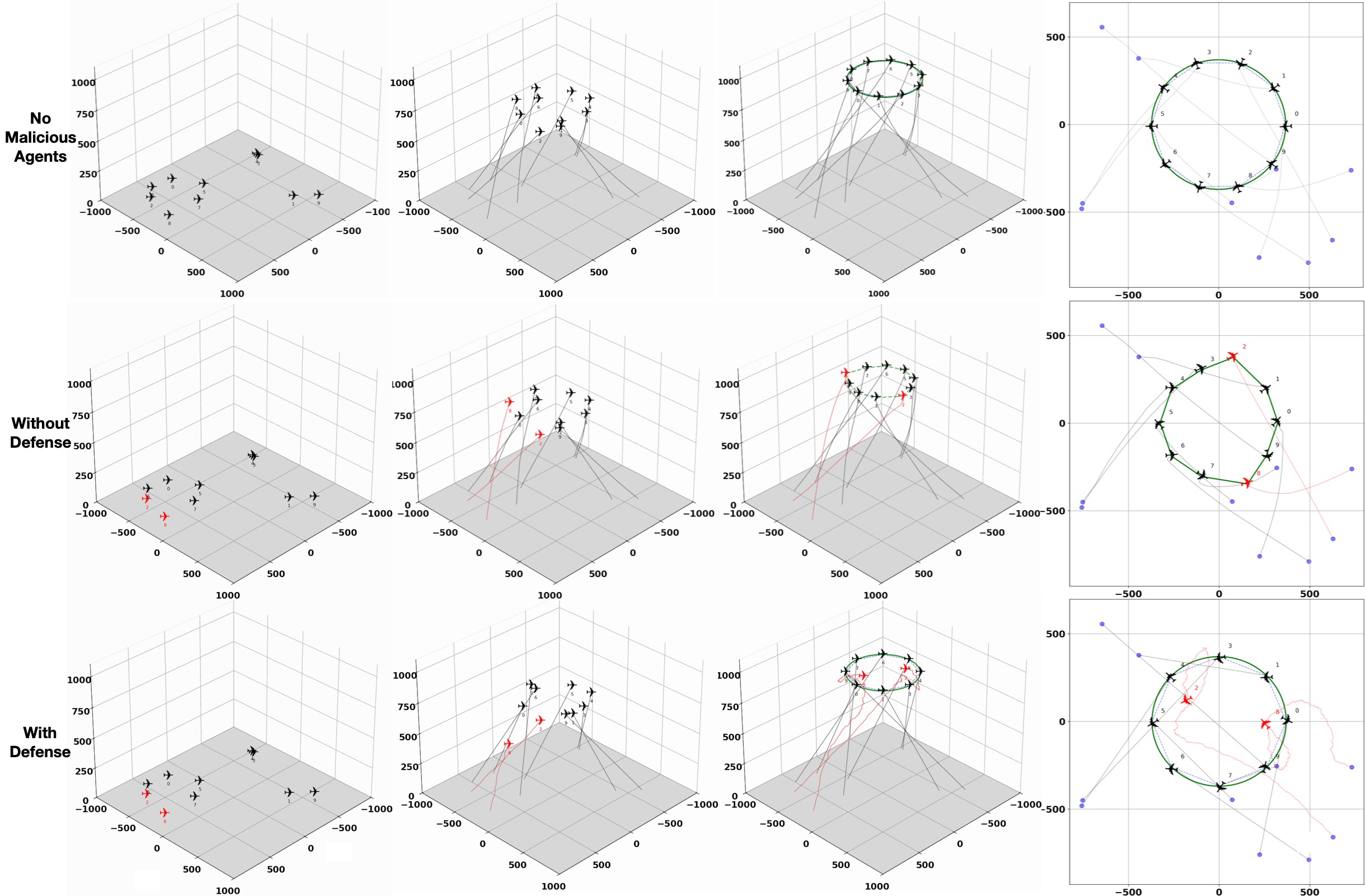}
    \caption{Temporal evolution of LLM-driven UAVs' state trajectories under adversarial scenarios}
    \label{fig:real-world simulation}
\end{figure*}
To validate our theoretical findings in a more realistic setting, we extend our evaluation to a three-dimensional aerospace scenario, wherein multiple LLM-driven UAVs must coordinate their flight trajectories while maintaining safe separation distances. Figure~\ref{fig:real-world simulation} visualizes agent behaviors under varying experimental conditions and presents a top-view of their final formation states. The simulation involves ten UAVs navigating a 2000×2000×1000 cubic meter airspace from randomly initialized positions toward a shared consensus objective, with the aim of maintaining a predefined geometric formation.

In the baseline scenario (top row), where all agents operate benignly without adversarial interference, the UAVs exhibit smooth trajectory convergence and consistently preserve the target formation. These results highlight the capacity of LLM-driven agents to achieve effective coordination in complex 3D environments under normal operational conditions. In contrast, the introduction of malicious agents without defense mechanisms (middle row) leads to severe formation degradation. Malicious UAVs, which are highlighted in red, emit deceptive positional information that induces neighboring agents to perform unnecessary avoidance maneuvers, resulting in erratic trajectories, increased separation distances, and overall mission failure due to disrupted consensus.

The deployment of our randomized smoothing defense (bottom row) significantly mitigates these adversarial effects. Despite the presence of identical malicious agents, the protected formation closely resembles the baseline behavior, maintaining stability and coherence throughout the mission. Our defense mechanism can effectively filter out inconsistent or adversarial signals, preventing their propagation across the agent network.

These results demonstrate that our framework enables robust coordination in realistic aerospace settings, ensuring formation integrity under adversarial perturbations. The proposed defense mechanism enhances the resilience and reliability of LLM-driven autonomous systems in safety-critical missions.

\subsection{Discussion on Real-time Feasibility}
\label{sec:discussion}

The computational efficiency of LLM-based MAS in safety-critical aerospace environments are crucial for practical deployment. In our experimental setting, we leverage the GPT API as the underlying LLM engine and utilize  parallel execution via the \texttt{nest\_asyncio} library to enable scalable consensus simulations. This parallelized execution ensures that the overall runtime of our approach is primarily governed by the latency of \textit{individual LLM queries}, which remains within practically acceptable ranges for aerospace mission scenarios.

From a theoretical perspective, the computational complexity of our defense framework scales linearly with the number of samples $m$ used for certification, i.e., $\mathcal{O}(m)$. The adaptive two-stage sampling mechanism dynamically determines this sample size based on the variance observed in the first stage, maintaining a balance between computational overhead and safety guarantees while minimizing the total number of LLM queries. This design prioritizes minimal query usage to meet real-time constraints: unnecessary queries are reduced in low-variance conditions, while robustness is ensured in high-variance scenarios. This dynamic adaptation provides a tunable mechanism to balance safety assurance and decision latency, accommodating the computational demands of diverse aerospace missions.

In practical deployment, the reliance on parallel LLM queries allows for efficient execution even when the number of agents increases, as the practical runtime is determined by the latency of a single LLM query rather than the cumulative number of queries. However, ensuring real-time performance in safety-critical aerospace systems extends beyond algorithmic complexity. The real-world deployment should consider hardware-in-the-loop validation, examining how different hardware platforms and network conditions impact the parallel query performance. This will further define the trade-off between sample size, safety certification, and real-time responsiveness to ensure reliable deployment in operational aerospace missions.

\section{Conclusion} \label{conclusion}
This paper introduced a novel defense framework using randomized smoothing to enhance the safety and robustness of LLM-driven MAS in safety-critical scenarios. We improved resilience against adversarial inputs and hallucination propagation, particularly in consensus-seeking tasks, and our simulations confirmed these expectations by demonstrating stable and reliable agent coordination under uncertainty. The proposed method offers a promising approach for deploying LLM-based MAS in real-world aerospace and autonomous systems where traditional safety guarantees fall short.


 \bibliographystyle{elsarticle-num} 
 \bibliography{cjoa}





\end{document}